\newtheorem{proposition}{Proposition}
\DeclareMathOperator*{\diag}{\mathrm{diag}}
\DeclareMathOperator*{\grad}{\mathrm{grad}}
\newcommand{\MM}{\mathcal{M}}
\newcommand{\hp}{\hphantom}
\newcommand{\adsn}{\mathrm{AdS_N}}
\newcommand{\boxedeq}[2]{\begin{empheq}[box={\fboxsep=6pt\fbox}]{align}\label{#1}#2\end{empheq}}
\icmltitlerunning{Directed Graph Embeddings in Pseudo-Riemannian Manifolds}
\begin{document}

\twocolumn[
\icmltitle{Directed Graph Embeddings in Pseudo-Riemannian Manifolds}

\icmlsetsymbol{equal}{*}

\begin{icmlauthorlist}
\icmlauthor{Aaron Sim}{bai}
\icmlauthor{Maciej Wiatrak}{bai}
\icmlauthor{Angus Brayne}{bai}
\icmlauthor{P\'aid\'i Creed}{bai}
\icmlauthor{Saee Paliwal}{bai}
\end{icmlauthorlist}

\icmlaffiliation{bai}{BenevolentAI, London, UK}
\icmlcorrespondingauthor{Aaron Sim}{aaron.sim@benevolent.ai}

\icmlkeywords{Machine Learning, ICML, Embeddings, Riemannian manifolds}

\vskip 0.3in
]

\printAffiliationsAndNotice{} 
\begin{abstract}
The inductive biases of graph representation learning algorithms are often encoded in the background geometry of their embedding space. In this paper, we show that general directed graphs can be effectively represented by an embedding model that combines three components: a pseudo-Riemannian metric structure, a non-trivial global topology, and a unique likelihood function that explicitly incorporates a preferred direction in embedding space. We demonstrate the representational capabilities of this method by applying it to the task of link prediction on a series of synthetic and real directed graphs from natural language applications and biology. In particular, we show that low-dimensional cylindrical Minkowski and anti-de Sitter spacetimes can produce equal or better graph representations than curved Riemannian manifolds of higher dimensions. 
\end{abstract} 

\section{Introduction}
\label{introduction}
Representation learning of symbolic objects is a central area of focus in machine learning. Alongside the design of deep learning architectures and general learning algorithms, incorporating the right level of inductive biases is key to efficiently building faithful and generalisable entity and relational embeddings \cite{battaglia2018}.

In graph representation learning, the embedding space geometry itself encodes many such inductive biases, even in the simplest of spaces. For instance, vertices embedded as points in Euclidean manifolds, with inter-node distance guiding graph traversal and link probabilities \cite{grover2016, perozzi2014}, carry the underlying assumption of homophily with node similarity as a metric function. 

The growing recognition that Euclidean geometry lacks the flexibility to encode complex relationships on large graphs at tractable ranks, without loss of information \cite{nickel2011, bouchard2015}, has spawned numerous embedding models with non-Euclidean geometries. Examples range from complex manifolds for simultaneously encoding symmetric and anti-symmetric relations \cite{trouillon2016}, to statistical manifolds for representing uncertainty \cite{vilnis2015}. 

One key development was the introduction of hyperbolic embeddings for representation learning \cite{nickel2017, nickel2018}. The ability to uncover latent graph hierarchies was applied to directed acyclic graph (DAG) structures in methods like Hyperbolic Entailment Cones \cite{ganea2018}, building upon Order-Embeddings \cite{vendrov2016}, and Hyperbolic Disk embeddings \cite{suzuki2019}, with the latter achieving good performance on complex DAGs with exponentially growing numbers of ancestors and descendants. Further extensions include hyperbolic generalisations of manifold learning algorithms \cite{sala2018}, product manifolds \cite{gu2018learning}, and the inclusion of hyperbolic isometries \cite{chami2020low}.

While these methods continue to capture more complex graph topologies they are largely limited to DAGs with transitive relations, thus failing to represent many naturally occurring graphs, where cycles and non-transitive relations are common features.

In this paper we introduce pseudo-Riemannian embeddings of both DAGs and graphs with cycles. Together with a novel likelihood function with explicitly broken isometries, we are able to represent a wider suite of graph structures. In summary, the model makes the following contributions to graph representation learning, which we will expand in more detail below: 

\begin{itemize}
\item The ability to disentangle semantic and edge-based similarities using the distinction of space- and timelike separation of nodes in pseudo-Riemannian manifolds.
\item The ability to capture directed cycles by introducing a compact timelike embedding dimension. Here we consider Minkowski spacetime with a circle time dimension and anti-de Sitter spacetime.
\item The ability to represent chains in a directed graph that flexibly violate local transitivity. We achieve this by way of a novel edge probability function that decays, asymmetrically, into the past and future timelike directions.
\end{itemize}

We illustrate the aforementioned features of our model by conducting a series of experiments on several small, simulated toy networks. Because of our emphasis on graph topology, we focus on the standard graph embedding challenge of link prediction. \textit{Link prediction} is the task of inferring the missing edges of, and often solely from, a partially observed graph \cite{nickel2015review}. Premised on the assumption that the structures of real world graphs emerge from underlying mechanistic latent models (e.g. a biological evolutionary process responsible for the growth of a protein-protein interaction network, linguistic rules informing a language graph, etc), performance on the link prediction task hinges critically on one's ability to render expressive graph representations, which pseudo-Riemannian embeddings allow for beyond existing embedding methods.

With this in mind, we highlight the quality of pseudo-Riemannian embeddings over Euclidean and hyperbolic embeddings in link prediction experiments using both synthetic protein-protein interaction (PPI) networks and the DREAM5 gold standard emulations of causal gene regulatory networks. Additionally, we show that our method has comparable performance to DAG-specific methods such as Disk Embeddings on the WordNet link prediction benchmark. Finally, we explore the ability of anti-de Sitter embeddings to further capture unique graph structures by exploiting critical features of the manifold, such as its intrinsic $S^1 \times \mathbb{R}^N$ topology for representing directed cycles of different lengths.

\subsection{Related Work}

The disadvantages of Euclidean geometry compared to Minkowski spacetime for graph representation learning was first highlighted in \citet{sun2015}. It was followed by \citet{clough2017} who explore DAG representations in Minkowski space, borrowing ideas from the theory of Causal Sets \cite{bombelli1987}. More broadly, the asymptotic equivalence between complex networks and large-scale causal structure of de Sitter spacetime was proposed and studied in \citet{krioukov2012network}. Our work is notably conceptually similar to the hyperbolic disk embedding approach \cite{suzuki2019} that embeds a set of symbolic objects with a partial order relation $\preceq$ as \textit{generalized formal disks} in a quasi-metric space $(X, d)$. A formal disk $(\mathbf{x}, r) \in X \times \mathbb{R}$ is defined by a \textit{center} $\mathbf{x} \in X$ and a \textit{radius} $r \in \mathbb{R}$.\footnote{\citet{suzuki2019} generalize the standard definition of a formal disk/ball to allow for negative radii.} Inclusion of disks defines a partial order on formal disks, which enables a natural representation of partially ordered sets as sets of formal disks.

These approaches all retain the partial-order transitivity assumption where squared distances decrease monotonically into the future and past. We relax that assumption in our work, alongside considering graphs with cycles and manifolds other than Minkowski spacetime.

Pseudo-Riemannian manifold optimization was formalized in \citet{gao2018} and specialized in \citet{law2020} to general quadric surfaces in Lorentzian manifolds, which includes anti-de Sitter spacetime as a special case. 

For the remainder of the paper, nodes are points on a manifold $\MM$, the probability of edges are functions of the node coordinates, and the challenge is to infer the optimal embeddings via pseudo-Riemannian SGD on the node coordinates. 

\section{Background}
In this section we will provide a very brief overview of the relevant topics in differential geometry.\footnote{For a longer introduction, see \citet{salamon} and \citet{isham1999modern}.}

\subsection{Riemannian Manifold Optimization}
The key difference between gradient-based optimization of smooth functions $f$ on Euclidean vs. non-Euclidean manifolds is that for the latter, the trivial isomorphsim, for any $p\in \MM$, between a manifold $\mathcal{M}$ and the tangent space $T_p\mathcal{M}$ no longer holds in general. In particular, the stochastic gradient descent (SGD) update step $p' \leftarrow p - \lambda \nabla f\rvert_p$ for learning rate $\lambda$ and gradient $\nabla f$ is generalized in two areas \cite{bonnabel}:

First, $\nabla f$ is replaced with the Riemannian gradient vector field
\begin{equation}\label{riemanngrad}
    \nabla f \rightarrow \grad f := g^{-1} df,
\end{equation}
where $g^{-1}: T^*_p\MM \rightarrow T_p\MM$ is the inverse of the positive definite \textit{metric} $g$, and $df$ the differential one-form. Second, the \textit{exponential map} $\exp_p: T_p\MM \rightarrow \MM$ generalizes the vector space addition in the update equation. For any $v_p \in T_p\MM$ the first-order Taylor expansion is
\begin{equation}\label{taylor}
    f(\exp_p(v_p)) \approx f(p) + g(\grad f\rvert_p, v_p),
\end{equation}
from which we infer that $\grad f$ defines the direction of steepest descent, i.e. the Riemannian-SGD (RSGD) update step is simply
\begin{equation}\label{expupdates}
    p' \leftarrow \exp_p(-\lambda \grad f \rvert_p).
\end{equation}
The classes of manifolds considered here all have analytic expressions for the exponential map (see below). The curves traced out by $\exp_p(tv_p)$ for $t\in \mathbb{R}$ are called \textit{geodesics} - the generalisation of straight lines to curved manifolds.

\subsection{Pseudo-Riemannian Extension}
A \textit{pseudo-Riemannian} (or, equivalently, \textit{semi-Riemannian}) manifold is a manifold where $g$ is non-degenerate but no longer positive definite. If $g$ is diagonal with $\pm 1$ entries, it is a \textit{Lorentzian} manifold. If $g$ has just one negative eigenvalue, it is commonly called a \textit{spacetime} manifold. $v_p$ is labelled \textit{timelike} if $g(v_p, v_p)$ is negative, \textit{spacelike} if positive, and \textit{lightlike} or \textit{null} if zero. 

It was first noted in \citet{gao2018} that $\grad f$ is not a guaranteed ascent direction when optimizing $f$ on pseudo-Riemannian manifolds, because its squared norm is no longer strictly positive (see eq. \eqref{taylor}). For diagonal coordinate charts one can simply perform a Wick-rotation \cite{visser2017, gao2018} to their Riemannian counterpart and apply eq. \eqref{expupdates} legitimately; in all other cases, additional steps are required to reintroduce the guarantee (see Section \ref{adssection} eq. \eqref{adsgrad}).

\subsection{Example Manifolds}
\subsubsection{Minkowski Spacetime}\label{minkowski}
Minkowski spacetime is the simplest Lorentzian manifold with metric $g = \diag(-1, 1, \dotsc, 1)$. The $N+1$ coordinate functions are $(x_0, x_1, \dotsc, x_N) \equiv (x_0, \mathbf{x})$ with $x_0$ the time coordinate. The squared distance $s^2$ between two points with coordinates $x$ and $y$ is
\begin{equation}
  s^2 = - (x_0 - y_0)^2  + \sum_{i=1}^N(x_i - y_i)^2.
\end{equation}
Because the metric is diagonal and flat, the RSGD update is made with the simple map
\begin{equation}
    \exp_p(v_p) = p + v_p,
\end{equation}
where $v_p$ is the vector with components $(v_p)^i = \delta^{ij}(df_p)_j$, with $\delta^{ij}$ is the trivial Euclidean metric.

\subsubsection{Anti-de Sitter spacetime}\label{adssection}
The $(N+1)$-dimensional anti-de Sitter spacetime \cite{adsnotes} $\adsn$ can be defined as an embedding in $(N + 2)$-dimensional Lorentzian manifold $(L, g_L)$ with two `time' coordinates. Given the canonical coordinates $(x_{-1}, x_0, x_1, \dotsc, x_N)\equiv (x_{-1}, x_0, \mathbf{x})$, $\adsn$ is the quadric surface defined by $ g_L(x, x) \equiv \langle x, x \rangle_L = -1$, or in full,
\begin{equation}
   -x_{-1}^2 - x_0^2 + x_1^2 + \dotsb + x_N^2 = -1.
\end{equation}

Another useful set of coordinates involves the polar re-parameterization $(x_{-1}, x_0) \rightarrow (r, \theta)$:
\begin{equation}\label{adspolar}
    x_{-1} = r \sin \theta, \quad x_0 = r \cos \theta,
\end{equation}
where by simple substitution
\begin{equation}\label{adsradius}
    r \equiv r(\mathbf{x}) = \biggl(1 + \sum_{i=1}^N x_i^2\biggr)^\frac{1}{2}.
\end{equation}
We define a circle time coordinate to be the arc length
\begin{equation}
t := r \theta = \begin{cases}
r \sin^{-1}(x_{-1} / r), \quad &x_{0} \geq 0, \\
r (\pi - \sin^{-1}(x_{-1} / r)), \quad &x_{0} < 0, \\
\end{cases} \label{arclengthtime}
\end{equation}
with $\mathbf{x}$-dependent periodicity $t \sim t + 2 \pi r(\mathbf{x})$.

The canonical coordinates and metric $g_L$ are not intrinsic to the manifold, so we must treat the pseudo-Riemannian gradient from eq. \eqref{riemanngrad} with the projection operator $\Pi_p: T_pL \rightarrow T_p\adsn$, defined for any $v_p\in T_pL$ to be \cite{salamon}
\begin{equation}
    \Pi_p v_p = v_p + g_L(v_p, p) p.
\end{equation}

Furthermore, as \citet{law2020} proved recently for general quadric surfaces, one can sidestep the need to perform the computationally expensive Gram-Schmidt orthogonalization by implementing a double projection, i.e. we have
\begin{equation}\label{adsgrad}
    \zeta_p = \Pi_p\Bigl(g_L^{-1}\bigl(\Pi_p(g_L^{-1}df)\bigr)\Bigr)
\end{equation}
as our guaranteed descent tangent vector.

The squared distance $s^2$ between $p, q \in \adsn$ is given by
\begin{equation}
    s^2 =
    \begin{cases}
    -|\cos^{-1}(-\langle p, q\rangle_L)|^2, \quad &-1 < \langle p, q \rangle_L \leq 1, \\
    \,(\cosh^{-1}(-\langle p, q\rangle_L))^2, \quad &\hp{-1 < \,\,}\langle p, q \rangle_L < -1, \\
    \qquad\qquad 0, \quad &\hp{-1 \leq \,\,}\langle p, q \rangle_L = -1, \\
    \qquad\quad-\pi^2, &\hp{-1 \leq \,\,} \langle p, q\rangle_L > 1,
    \end{cases}
\end{equation}
where the first three cases are the squared geodesic distances between time-, space-, and lightlike separated points respectively. For $\langle p, q\rangle_L > 1$, there are no geodesics connecting the points. However, we require a smooth loss function in $s^2$ with complete coverage for all $p, q$ pairs, so we set $s^2$ to the value at the timelike limit $\langle p, q\rangle_L = 1$.

The exponential map is given by
\begin{equation}
    \exp_p(\zeta_p) = 
    \begin{cases}
    \cos (\lVert \zeta_p \rVert) p + \sin (\lVert \zeta_p \rVert) \frac{\zeta_p}{\lVert \zeta_p \rVert}, \\
    \cosh (\lVert \zeta_p \rVert) p + \sinh (\lVert \zeta_p \rVert) \frac{\zeta_p}{\lVert \zeta_p \rVert}, \\
    p + \zeta_p, 
    \end{cases}
\end{equation}
again for the time-, space-, and lightlike $\zeta_p$ respectively, and where $\lVert \zeta_p \rVert \equiv \sqrt{|g_L(\zeta_p, \zeta_p)|}$.

\section{Pseudo-Riemannian Embedding Models} \label{methods}
In this section we describe the key components of our embedding model, starting with defining a probability function for a directed edge between any two nodes in a graph. 

\subsection{Triple Fermi-Dirac Function}
The Fermi-Dirac (FD) distribution function\footnote{The case of $\alpha=1$ is the usual parameterization.}
\begin{equation}\label{fd}
    F_{(\tau, r, \alpha)}(x) := \frac{1}{e^{(\alpha x - r) / \tau} + 1},
\end{equation}
with $x\in\mathbb{R}$ and parameters $\tau, r \geq 0$ and $0 \leq \alpha \leq 1$, is used to calculate the probabilities of undirected graph edges as functions of node embedding distances \cite{krioukov2010, nickel2017}. For general directed graphs one needs to specify a preferred direction in the embedding space.\footnote{There are situations where the probability function itself is isotropic but that the features of the graph displays something analogous to spontaneous symmetry breaking where the optimal node embeddings identifies a natural direction. This is indeed the case for tree-like graphs and hyperbolic embeddings where the direction of exponential volume growth lines up with the exponential growth in the number of nodes as one goes up the tree. However for general directed graphs, an explicit symmetry breaking term in the function is needed in our case.} This was the approach taken in \citet{suzuki2019}, using the radius coordinate, and our method follows this same principle. 

For spacetime manifolds, the time dimension is the natural coordinate, indicating the preferred direction. For two points $p, q \in \MM$,  we propose the following distribution $\mathcal{F}$ which we refer to as the \emph{Triple Fermi--Dirac} (TFD) function:
\boxedeq{tfd}{\mathcal{F}_{(\tau_1, \tau_2, \alpha, r, k)}(p, q) := k (F_1F_2F_3)^{1/3},}
where $k > 0$ is a tunable scaling factor and
\begin{align}
    F_1 &:= F_{(\tau_1, r, 1)}(s^2), \label{F1} \\
    F_2 &:= F_{(\tau_2, 0, 1)}(-\Delta t), \label{F2} \\
    F_3 &:= F_{(\tau_2, 0, \alpha)}(\Delta t), \label{F3}
\end{align}
are three FD distribution terms. $\tau_1, \tau_2, r$ and $\alpha$ are the parameters from the component FD terms \eqref{fd}, $s^2$ the squared geodesic distance between $p$ and $q$, and $\Delta t$ the time coordinate difference, which in the case of Minkowski spacetime, is simply $x_0(q) - x_0(p)$. It is easy to see that for $k \leq 1$, $\mathcal{F}(p, q)$ is a valid probability value between 0 and 1.

The motivations behind eq. \eqref{tfd} can be understood by way of a series of simple plots of $\mathcal{F}$ and its component FD terms over a range of spacetime intervals on 2D Minkowski spacetime. In Figure \ref{lossheatmaps} we fix $p$ at the origin and plot $\mathcal{F}$ as a function of $q$. Concretely we demonstrate in the remainder of this section that the TFD function combines with the pseudo-Riemannian metric structure to encode three specific inductive biases inherent in general directed graphs: 1. the co-existence of semantic and edge-based similarities, 2. the varying levels of transitivity, and 3. the presence of graph cycles.

\subsubsection{Semantic Similarity vs. Direct Neighbors}
A pair of graph vertices $x, y$ can be similar in two ways. They could define an edge, in which case they are simply neighbors, or they could have overlapping neighbor sets $N_x$ and $N_y$. However, in Riemannian manifold graph embedding models, where node distances determine edge probabilities, a high degree of this latter \textit{semantic similarity} suggests that $x$ and $y$ are in close proximity, especially if both $N_x$ and $N_y$ are themselves sparsely connected \cite{sun2015}. This can be in conflict with the absence of an edge between the vertex pair. A schematic of this structure is shown in Figure \ref{toy_example_2}A. For example, in sporting events this can occur when two members of a team may compete against a largely overlapping set of competitors, but would never be pitted against one another.

Embedding graphs in a spacetime resolves this inconsistency -- for a pair of vertices that do not share an edge there is no constraint on the Jaccard index $(N_x \cap N_y) / (N_x \cup N_y)$. This claim can be verified by examining the contribution from the first FD term $F_1$ \eqref{F1} in the TFD function. As shown in Figure \ref{lossheatmaps}D, $F_1$ is low when $x$ and $y$ are spacelike separated and high when within each other's past and future lightcones. Our claim can then be rephrased in geometric terms by stating that two spacelike separated points (hence low edge probability) can have an arbitrarily large overlap in their lightcones (and hence in their neighbor sets).

\begin{figure}[ht!] 
  \centering
  \includegraphics[width=\columnwidth]{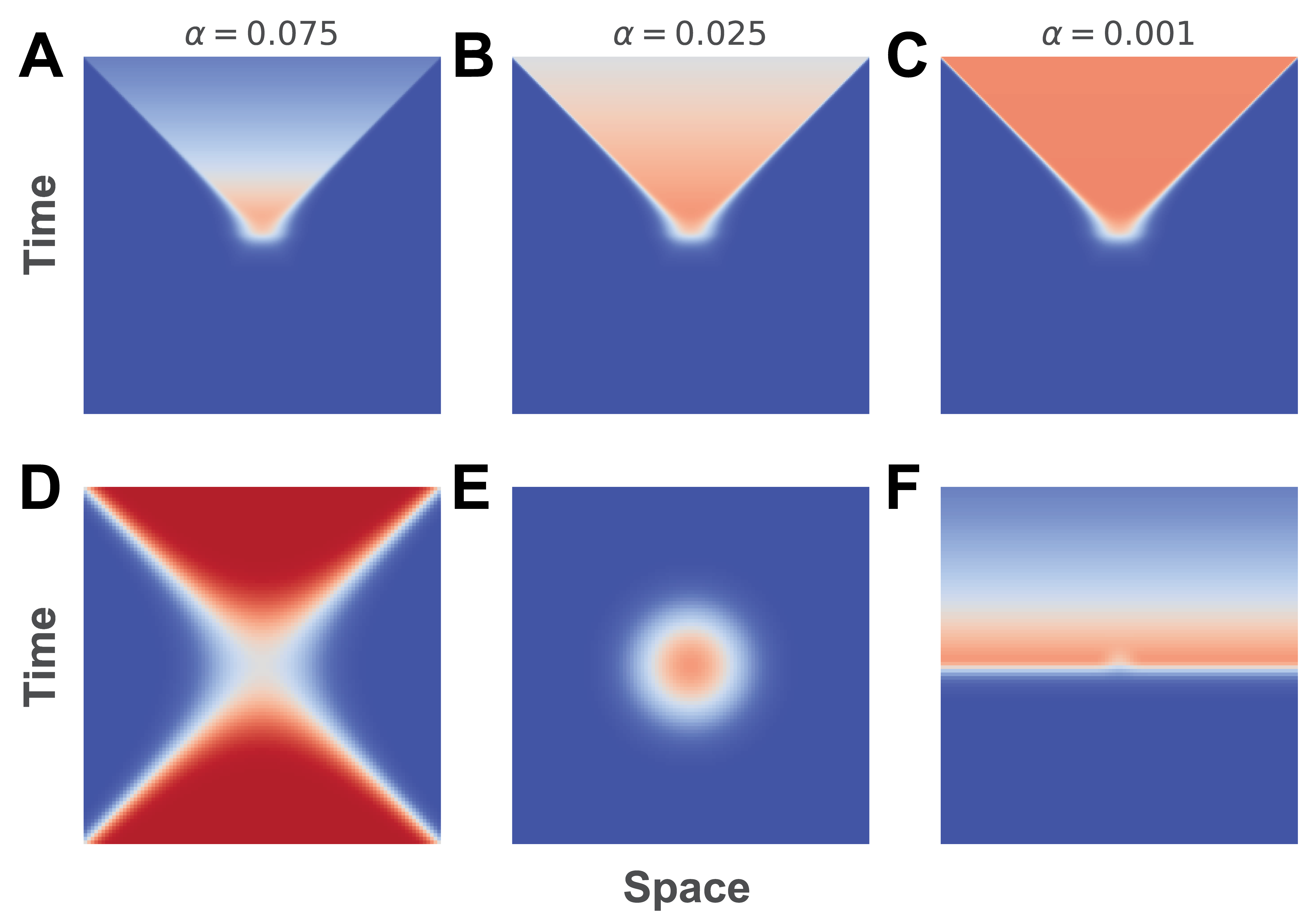}
  \caption{\textbf{A-C}: The TFD function $\mathcal{F}$ (eq. \eqref{tfd}) in 2D Minkowski spacetime  for varying $\alpha$. \textbf{D}: The FD function $F_1$ (eq. \eqref{F1}) in Minkowski space, \textbf{E}: $F_1$ in Euclidean space, \textbf{F}: $\mathcal{F}$ in Euclidean space. Blue to red corresponds probabilities from 0 to 1.}
  \label{lossheatmaps}
\end{figure}

\subsubsection{Directed Non-Transitive Relations}
Simply using $F_1$ for edge probabilities is problematic in two ways. First, the past and future are indistinguishable (Figure \ref{lossheatmaps}D), and hence too the probabilities for both edge directions of any node pair. Second, lightcones, being cones, impose a strict partial ordering on the graph. Specifically, any univariate function of $s^2$ that assigns, by way of future lightcone inclusion, a high probability to the directed chain
\begin{equation}
    p_1 \rightarrow p_2 \rightarrow p_3
\end{equation}
will always predict the transitive relation $p_1\rightarrow p_3$ with a higher probability than either $p_1\rightarrow p_2$ or $p_2 \rightarrow p_3$ (see Figure \ref{toy_example_3}A). This is a highly restrictive condition imposed by a naive graph embedding on a pseudo-Riemannian manifold. In real-world networks such as social networks, these friend-of-friend links are often missing, violating this constraint.

We tackle both the need for edge anti-symmetries and for flexible transitive closure constraints via the other two FD terms $F_2$ and $F_3$ (Eqs. \eqref{F2} and \eqref{F3}). Both terms introduce exponential decays into the respective past and future timelike directions, thereby breaking the strict partial order. Crucially, they are asymmetric due to the parameter $\alpha$ in $F_3$, which introduces different decay rates. As shown in Figure \ref{lossheatmaps}A, the TFD function has a local maximum in the near timelike future. $\alpha$ controls the level of transitivity in the inferred graph embeddings, as we see in Figures \ref{lossheatmaps}A-C. Euclidean disk embeddings \cite{suzuki2019} can be viewed as approximately equivalent to the $\alpha = 0$ case in flat Minkowski spacetime, where the region of high edge probability extends far out into the future lightcone (see Appendix \ref{diskembedding} for a precise statement of this equivalence).

We note that the TFD function is well-defined on Riemannian spaces, as long as one designates a coordinate to take the role of the time dimension (see Figure \ref{lossheatmaps}E-F).

\subsubsection{Graph Cycles}
Another feature of the TFD probabilistic model in \eqref{tfd} is that the lightcone boundaries are soft transitions that can be modified by adjusting the temperature hyperparameters $\tau_1$ and $\tau_2$. Specifically, short (in a coordinate-value sense) directed links into the past or in spacelike directions have probabilities close to $1/2$, as can be verified by a Taylor expansion of eq. \eqref{tfd} around $p=q=0$ (see Figure \ref{lossheatmaps}D).

This feature alone does not constitute a sufficient basis to promote pseudo-Riemannian embeddings with the TFD distribution as a model for cyclic graph representations (e.g. the toy example in Figure \ref{toy_example_1}A). Embedding long directed cycles in this way is not efficient. For example, a length-$n$ chain with $O(n^2)$ number of possible (and missing) transitive edges would have a similar embedding pattern to a fully-connected clique. This distinction is important when modeling real world systems, such as gene regulation, in which important information is encoded in sparse networks with cycles \cite{leclerc2008survival}. For this we turn our attention to the global topology of the manifold.

\subsection{Cylinder Topology: $S^1 \times \mathbb{R}^N$}
The problem with embedding cycles via a model that favors future-timelike directed links is the unavoidable occurrence of at least one low-probability past-directed edge. Here we propose a circular time dimension as a global topological solution. We consider two such pseudo-Riemannian manifolds with a $S^1 \times \mathbb{R}^N$ topology - a modified cylindrical Minkowski spacetime and anti-de Sitter spacetime.

\subsubsection{TFD Function on Cylindrical Minkowski Spacetime}
For an $(N+1)$-dimensional spacetime with time coordinates $(x_0, \mathbf{x})$, we construct our cylinder by identifying
\begin{equation}
    x_0 \sim x_0 + nC,
\end{equation}
for some circumference $C > 0$ and $n\in \mathbb{Z}$. To ensure a smooth TFD function on the cylinder we define a \textit{wrapped TFD} function as
\begin{equation}\label{wrappedtfd}
    \widetilde{\mathcal{F}}(p, q) := \sum_{n=-\infty}^{\infty} \mathcal{F}(p, q^{(n)}),
\end{equation}
where $x_0(q^{(n)}) \equiv x_0(q) + n C$, with all other coordinates equal. Unlike the case of the wrapped normal distribution, there is no (known) closed-form expression for $\widetilde{\mathcal{F}}$. However, provided $\alpha > 0$, the exponentially decreasing probabilities from $F_2$ and $F_3$ into the far past and future time directions respectively enables one to approximate the infinite sum in eq. \eqref{wrappedtfd} by truncating to a finite sum over integers from $-m$ to $m$. The scaling factor $k$ in eq. \eqref{tfd} can be used to ensure $\max \widetilde{\mathcal{F}} \leq 1$.

In this topology, the concept of space- and timelike separated points is purely a local feature, as there will be multiple timelike paths between \textit{any} two points on the manifold. 

\subsubsection{TFD Function on Anti-de Sitter spacetime}\label{tfdonads}
Unlike for Minkowski spacetime, AdS already has an intrinsic $S^1 \times \mathbb{R}^N$ topology.

To adapt the TFD distribution to AdS space, we adopt polar coordinates and let the angular difference between $p$ and $q$ be
\begin{equation}
    \theta_{pq} := \theta_q - \theta_p,
\end{equation}
where $\theta_p$ and $\theta_q$ correspond to the polar angle $\theta$ from eq. \eqref{adspolar}. Then we define 
\begin{equation}
    \Delta  t_{pq} := r_q \theta_{pq}.
\end{equation}

Similar to the cylindrical Minkowski case in eq. \eqref{wrappedtfd} we modify $\mathcal{F}$ such that for two points $p$ and $q$, we have
\begin{equation}
    \mathcal{F}(p, q) = \mathcal{F}(p, q^{(n)}),
\end{equation}
where $q^{(n)} \in \adsn$ have identical coordinates with $q$ except that
\begin{align}
    \Delta t_{pq}^{(n)} &:= \Delta  t_{pq} + 2 \pi n r_q.
\end{align}
The wrapped TFD distribution is identical to eq. \eqref{wrappedtfd} with $\Delta t_{pq}$ as the time difference variable $\Delta t$ in the $F_2$ and $F_3$ components (eqs. \eqref{F2} and \eqref{F3} respectively).

\section{Experiments}
\subsection{Datasets and Training Details}

In this section we evaluate the quality of pseudo-Riemannian embeddings via a series of graph reconstruction and link prediction experiments. Using small synthetic graphs, we begin with a demonstration of the model's ability to encode the particular set of graph-specific features outlined in Section \ref{methods}. We then run the model on a series of benchmarks and an ablation study to characterize the utility of these embeddings in downstream applications. For this second group of experiments, we rely on the following three classes of directed graph datasets.

\paragraph{Duplication Divergence Model:} A two-parameter model that simulates the growth and evolution of large protein-protein interaction networks \cite{ispolatov2005}. Depending on the topology of the initial seed graphs, the final graph is either a DAG or a directed graph with cycles.

\paragraph{DREAM5:}
Gold standard edges from a genome-scale network inference challenge, comprising of a set of gene regulatory networks across organisms and an \textit{in silico} example \cite{marbach2012}. These networks contain a relatively small number number of cycles.

\paragraph{WordNet:} An acyclic, hierarchical, tree-like network of nouns, each with relatively few ancestors and many descendants. We consider networks with different proportions of transitive closure. We use the same train / validation / test split as in \citet{suzuki2019} and \citet{ganea2018}.

In all our experiments, we seek to minimize the negative log-likelihood (NLL) loss based on the probabilities \eqref{fd} or \eqref{tfd}. We fix a negative sampling ratio of 4 throughout. Similar to \citet{nickel2017}, we initialize our embeddings in a small random patch near the origin ($x=(1, 0, \dotsc, 0)$ for AdS) and perform a burn-in phase of several epochs with the learning rate scaled by a factor of $0.01$. All of our models and baselines are run across various dimensions to ensure our methods do not display higher performance only at certain dimensionality level. We provide full details of each experiment, including the hyperparameter sweep and average runtimes, alongside a more full description of the datasets in the Appendix \ref{exptdetails}. The models in this paper were implemented in JAX \cite{jax2018github}.

\subsection{Graph Feature Encoding}
\begin{figure}[ht!] 
  \centering
  \includegraphics[width=\columnwidth]{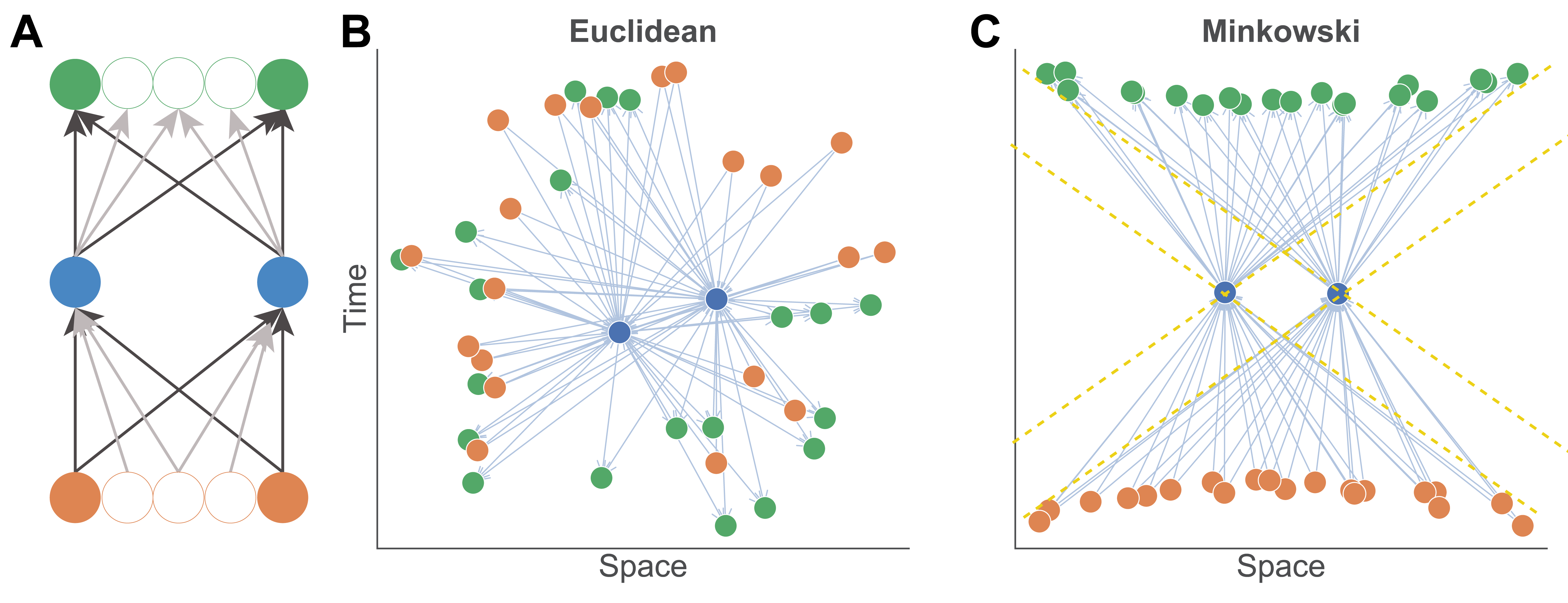}
  \caption{\textbf{A} Tri-partite graph of an unconnected node pair with large number of common predecessors and successors. \textbf{B} Euclidean embeddings with inferred edge probability between the node pair as $0.50$. \textbf{C} Minkowski embeddings with inferred edge probability of $0.01$.}
\label{toy_example_2}
\end{figure}

\begin{figure}[ht!] 
  \centering
  \includegraphics[width=\columnwidth]{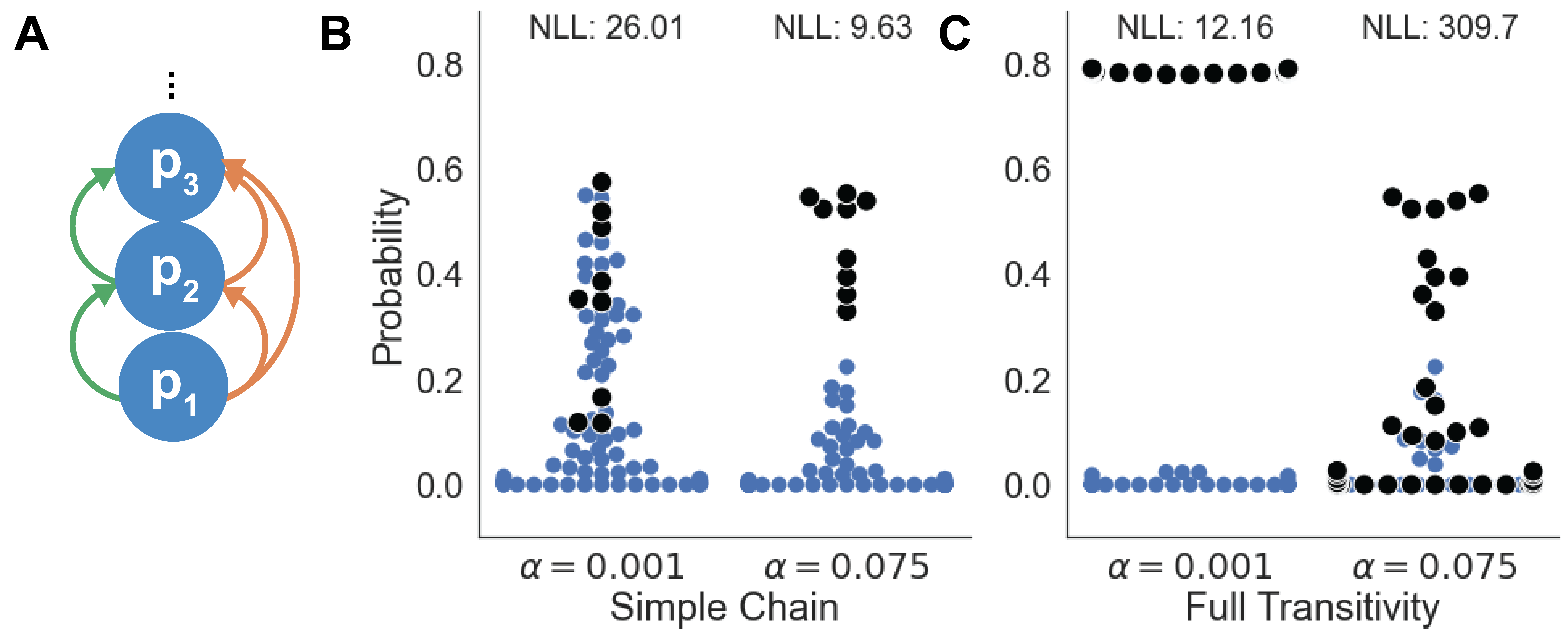}
  \caption{Performance of Minkowski embeddings for varying $\alpha$ values in the TFD function on two example graphs. \textbf{A}: Schematic of a three-node graph, with green edges indicating a simple chain, and orange edges indicating a full transitive closure. \textbf{B}: 10-node chain. \textbf{C}: 10-node fully transitive DAG. Black and blue markers indicate positive and negative directed edges respectively. Also annotated is the NLL of the graph/model pair.}
\label{toy_example_3}
\end{figure}

\begin{figure*}[ht!] 
  \includegraphics[width=\textwidth]{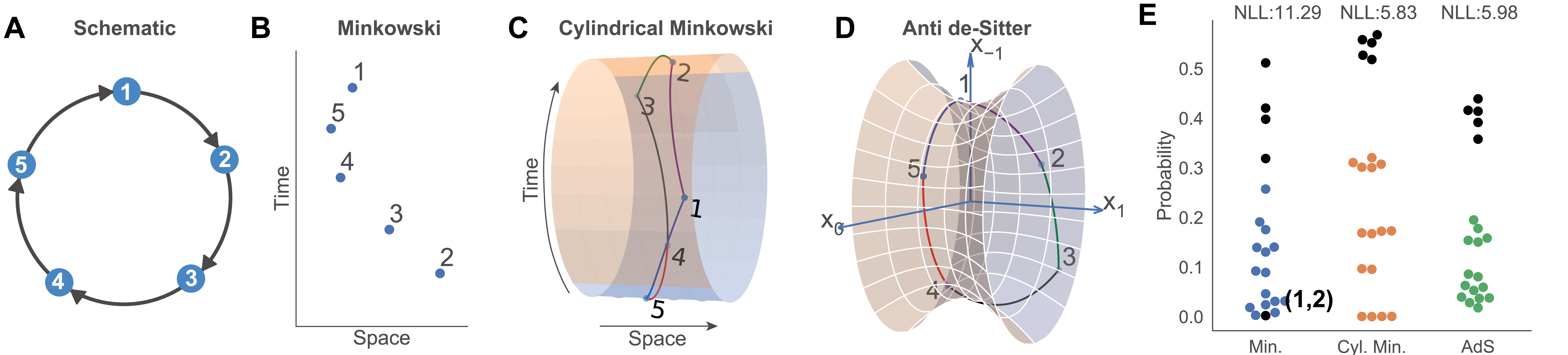}
  \caption{\textbf{A}: Illustration of a 5-node chain used as training data. \textbf{B-D}: 5-node chain embedding on various manifolds. \textbf{E}: Edge probabilities for 5-node cycle using Minkowski (Min.), cylindrical Minkowski (Cyl. Min.) and AdS embeddings. The five positive edges are indicated in black dots while the 15 negative edges are the dots with colors.}
  \label{toy_example_1}
\end{figure*}

\paragraph{Node pairs with arbitrarily large overlapping neighbor sets can remain unconnected.}
We construct a graph consisting of an unconnected node pair, and two large sets of unconnected, common predecessors and successors (Figure \ref{toy_example_2}A). The Euclidean embedding model trained with the FD edge probability function cannot reconcile the high semantic similarity of the node pair with the absence of an edge; we see that the pair of nodes are unavoidably closer to each other than the majority of their neighbor connections. (Figure \ref{toy_example_2}B). On the contrary, the Minkowski model effectively encodes the tri-partite graph as three sets of spacelike separated points with high degree of overlap in the lightcones of the nodes in the pair (Figure \ref{toy_example_2}C).

\paragraph{$\mathcal{F}$ can be tuned to encode networks with varying proportion of transitive relations}
We construct a simple, 10-node chain and a fully transitive variant to explore our ability to tune the TFD loss to best capture graph transitive structure, using the alpha parameter to vary probability decay in the time direction. We use two $\alpha$ values, $\alpha=0.001$ and $\alpha=0.075$, corresponding to the heatmaps shown in Figure \ref{lossheatmaps}A and C. In Figure \ref{toy_example_3}, we see that smaller values of $\alpha$ are able to more successfully capture full transitivity. In the case of a simple chain, setting $\alpha=0.001$ results in a higher probability being assigned to negative edges, compared to the $\alpha=0.075$ case.

\paragraph{Graph cycles wrap around the $S^1$ dimension.}
We embed a simple five-node loop (Figure \ref{toy_example_1}A) in 2D Minkowski (B), cylindrical Minkowski (C), and anti-de Sitter (D) spacetimes. In all three manifold embeddings, we recover the true node ordering as reflected in the ordered time coordinates. However for the Minkowski manifold, there is one positive edge ($1\rightarrow 2$ in this case) that unavoidably gets incorrectly assigned a very low probability, which we indicate in Figure \ref{toy_example_1}E. On the contrary, the $S^1$ time dimension for cylindrical Minkowski and AdS spacetime ensures that all edges have high probabilities thereby demonstrating the utility of the circle time dimension. We note that all the pseudo-Riemannian embedding models, including the (non-cylindrical) Minkowski case, outperform the Euclidean (NLL: 17.24) and the hyperboloid models (NLL: 13.89) (not shown in Figure \ref{toy_example_1}).

\subsection{Cyclic Graph Inference}
\begin{table*}[ht!]
\caption{Link prediction for directed cyclic graphs. We show the median average precision (AP) percentages across 20 random initializations on a held-out test set, calculated separately for different embedding dimensions $d$. Annotated in \textbf{bold} is the top-performing model for the given dimension. For reference, the asymptotic random baseline AP is $20\%$.}
\begin{adjustbox}{max width=0.999\textwidth}
\begin{tabular}{lccccccccccc}
& \multicolumn{5}{c}{Duplication Divergence Model} &  & \multicolumn{5}{c}{DREAM5 : \textit{in silico}} \\ \cline{2-6}\cline{8-12} 
  & $\hp{0}d=3\hp{0}$      & $\hp{0}d=5\hp{0}$      & $\,d=10\,$     & $\,d=50\,$     & $d=100$  & $\vphantom{\bigl(X^{X^X}}$ & $\hp{0}d=3\hp{0}$      & $\hp{0}d=5\hp{0}$      & $\,d=10\,$     & $\,d=50\,$     & $d=100$  \\ \toprule
 Euclidean + FD & 37.8    & 39.4    & 39.0    & 38.9    & 38.9 &  & 29.4 & 32.9 & 39.7 & 39.8 & 34.8  \\  
 Hyperboloid + FD            & 36.3    & 37.5    & 38.2    & 38.2    & 38.1  &  & 28.8 & 46.8 & 50.8 & 50.9 & 52.5  \\ \cmidrule{1-12}
{Minkowski + TFD}  & 43.7    & 47.5    & 48.5   & 48.5  & 48.5  &  & 36.3 & 43.1 & 51.2 & 57.7 & 58.0  \\ 
{Anti-de Sitter + TFD} & 50.1    & 52.4    & 56.2    & 56.3    & 56.8  &  & 38.1    & 45.2    & 51.9    & 55.6    & 56.0  \\
{Cylindrical Minkowski + TFD} & \textbf{55.8}   & \textbf{61.6}    & \textbf{65.3}    & \textbf{65.7}    & \textbf{65.6} &  & \textbf{41.0} & \textbf{48.4} & \textbf{56.3} & \textbf{58.9} & \textbf{61.0}  \\ 
\end{tabular}
\end{adjustbox}
\label{dupdiv_dream5_table}
\end{table*}

We perform link prediction on the Duplication Divergence Model network and the \textit{in silico} DREAM5 datasets.\footnote{For full results on DREAM5 networks for \textit{Escherichia coli} and \textit{Saccharomyces Cerevisiae} please refer to the Appendix \ref{exptdetails}. The conclusions there are similar to the main text experiment.} The results on randomly held-out positive edges for a range of embedding models of various dimensions are presented in Table \ref{dupdiv_dream5_table}. Here we see that for both datasets pseudo-Riemannian embeddings significantly outperform Euclidean and Hyperboloid embedding baselines across all dimensions, with the cylindrical Minkowski model achieving the best result. 

Next we perform an ablation study to examine the relative contributions of 1. the pseudo-Riemannian geometry, as reflected in the presence of negative squared distances, 2. the global cylindrical topology, and 3. TFD vs. FD likelihood model. The results are presented in Figure \ref{DAGnonDAGs}.

We see that all three components combine to produce the superior performances of the anti-de Sitter and cylindrical Minkowski manifold embeddings, with the $S^1$ topology arguably being the biggest contributor.

\begin{figure}[ht!] 
  \centering
  \includegraphics[width=\columnwidth]{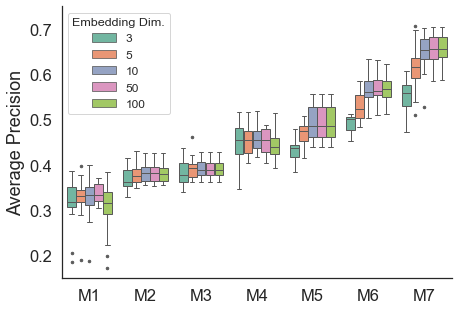}
  \caption{The average precision values of link prediction on the Duplication Divergence Model graph across manifold / likelihood model combinations and embedding dimensions, over multiple random seeds. M1: Euclidean manifold with TFD, M2: Hyperboloid + FD, M3: Euclidean with FD, M4:  cylindrical Euclidean + TFD, M5: Minkowski + TFD, M6: Anti-de Sitter + TFD, M7: cylindrical Minkowski + TFD.}
  \label{DAGnonDAGs}
\end{figure}

\subsection{DAG Inference}
We test our method on the WordNet dataset, which is a DAG that has been used in prior work on DAG inference \cite{nickel2017, ganea2018, suzuki2019}. Table \ref{Wordnet_figure} shows the performance of the pseudo-Riemannian approach on the link prediction task, benchmarked against a set of competitive methods (results taken from \citet{suzuki2019}). We break down the methods into flat- and curved-space embedding models. Our approach outperforms all of the other flat-space as well as some curved-space methods, including Poincare embeddings \cite{nickel2017}. Furthermore, the Minkowski embedding model achieves competitive performance with hyperbolic and spherical approaches. These methods are well suited to representing hierarchical relationships central to WordNet, thus this result shows that pseudo-Riemannian models are highly capable of representing hierarchies by encoding edge direction. We can show that the representational power of a special case of the TFD probability on flat Minkowski spacetime is similar to that of Euclidean Disk Embeddings (see Appendix \ref{diskembedding}). The difference in performance between Euclidean Disk Embeddings and our model on this task could be due to the additional flexibility allowed by the TFD probability function or in differences in the resulting optimization problem, something that could be further explored in future work.

\begin{table}[ht!]
\caption{F1 percentage score on the test data of WordNet. The best flat-space performance (top-half) for each dataset/embedding dimension combination has been highlighted in gray and the best overall is in \textbf{bold}. The benchmark method's results were taken from \citet{suzuki2019}. The full results table is presented in the Appendix \ref{exptdetails}.}
\label{Wordnet_figure}
\resizebox{\columnwidth}{!}{%
\begin{tabular}{lccccc}
                       & \multicolumn{2}{c}{$d=5$} &  & \multicolumn{2}{c}{$d=10$} \\ \cline{2-3} \cline{5-6} 
\textit{Transitive Closure Percentage}                       & 25\%      & 50\%      &  & 25\%       & 50\%      \\ \toprule
Minkowski + TFD (Ours) & \cellcolor[HTML]{EFEFEF}86.2    & \cellcolor[HTML]{EFEFEF}92.1    &  & \cellcolor[HTML]{EFEFEF}89.3     & \cellcolor[HTML]{EFEFEF}\textbf{94.4}    \\
Order Emb. \cite{vendrov2016}                  & 75.9    & 82.1    &  & 79.4     & 84.1   \\
Euclidean Disk \cite{suzuki2019}         & 42.5    & 45.1    &  & 65.8     & 72.0    \\ \cmidrule{1-6}
Spherical Disk \cite{suzuki2019}          & \textbf{90.5}    & \textbf{93.4}    &  & \textbf{91.5}     & 93.9    \\
Hyperbolic EC \cite{ganea2018}                    & 87.1    & 92.8    &  & 90.8     & 93.8    \\
Hyperbolic Disk \cite{suzuki2019}        & 81.3    & 83.1    &  & 90.5     & 94.2    \\
Poincare Emb. \cite{nickel2017}               & 78.3    & 83.9    &  & 82.1     & 85.4    \\ 
\end{tabular}
}
\end{table}

\section{Discussion and Future Work}
While Riemannian manifolds embeddings have made significant advances in representing complex graph topologies, there are still areas in which these methods fall short, due largely to the inherent constraints of the metric spaces used. In this paper, we demonstrated how pseudo-Riemannian embeddings can effectively address some of the open challenges in graph representation learning, summarized in the following key results. Firstly, we are able to successfully represent directed cyclic graphs using the novel Triple Fermi-Dirac probability function and the cylindrical topology of the manifolds, and to disambiguate direct neighbors from functionally similar but unlinked nodes, as demonstrated in a series of experiments on synthetic datasets. Also, Minkowski embeddings strongly outperform Riemannian baselines on a link prediction task in directed cyclic graphs, and achieve results comparable with state-of-the-art methods on DAGs across various dimensions. In this latter case, applying our approach gave us the flexibility to lift the constraints of transitivity due to the temporal decay of the TFD probability function. Using these characteristics, we demonstrate superior performance on a number of directed cyclic graphs: the duplication-divergence model and a set of three DREAM5 gold standard gene regulatory networks. 

There are two areas for further work. First, in addition to further validation of the pseudo-Riemannian optimization procedure introduced in \citet{law2020}, one can experiment with AdS coordinates different to the one proposed in Section \ref{tfdonads}, or extend the methods in this work to more general classes of pseudo-Riemannian manifolds. Second, we note that the field closely related to link prediction is network inference -- inferring causal networks from observational and interventional data is a critical problem with a wide range of applications. In biology, for example, robust and accurate inference of gene regulatory networks from gene expression data is a long-standing grand challenge in systems biology. In this case, limited ground-truth connectivity is usually already available as a prior -- a logical extension of the model presented here would be a hybrid model that combines pseudo Riemannian-based link prediction with metric space network inference.

\section*{Acknowledgements}
We would like to thank Craig Glastonbury for helpful discussions and suggestions on the manuscript.

\bibliography{references}
\bibliographystyle{icml2021}
\newpage
\appendix

\section{Relationship with Disk Embeddings}\label{diskembedding}
We prove the following result, relating Euclidean Disk Embeddings \cite{suzuki2019} to the Triple Fermi-Dirac (TFP) model in Minkowski spacetime.

\begin{proposition}
Let $p=(\mathbf{x},u)$ and $q=(\mathbf{y},v)$ denote elements of $\mathbb{R}^{n} \times \mathbb{R}$, $D$ be the Euclidean distance between $\mathbf{x}$ and $\mathbf{y}$, and $T=v-u$ the time difference. Then, for the choice of TFD function parameters $\alpha = 0, r=0, k=1$ we have $\mathcal{F}(p,q) \geq \frac{1}{2}$ if and only if
\begin{equation}\label{eq:minkowski_pos}
D \leq \left(\tau_1\log\frac{3 - e^{-T / \tau_2}}{1 + e^{-T / \tau_2}} + T^2\right)^{1/2}
\end{equation}
\end{proposition}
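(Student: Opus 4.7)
The plan is to substitute the stated parameter values into the definition of $\mathcal{F}$, reduce the condition $\mathcal{F}(p,q) \geq 1/2$ to an algebraic inequality in $D$ and $T$, and then solve explicitly for $D$.

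First I would simplify the three Fermi--Dirac factors. Since $\alpha = 0$, the third factor collapses to a constant: $F_3 = F_{(\tau_2, 0, 0)}(\Delta t) = 1/(e^0 + 1) = 1/2$. In flat Minkowski spacetime the squared geodesic distance is $s^2 = D^2 - T^2$, so with $r = 0$ we obtain
\[
F_1 = \frac{1}{\exp\!\bigl((D^2 - T^2)/\tau_1\bigr) + 1}, \qquad F_2 = \frac{1}{\exp(-T/\tau_2) + 1},
\]
using $\Delta t = v - u = T$ in $F_2$.

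Next, with $k = 1$ the inequality $\mathcal{F}(p,q) = (F_1 F_2 F_3)^{1/3} \geq 1/2$ is equivalent, after cubing, to $F_1 F_2 \geq 1/4$, which I would rewrite as
\[
\bigl(\exp((D^2 - T^2)/\tau_1) + 1\bigr)\bigl(\exp(-T/\tau_2) + 1\bigr) \leq 4.
\]
Isolating the first exponential yields
\[
\exp\!\bigl((D^2 - T^2)/\tau_1\bigr) \leq \frac{3 - \exp(-T/\tau_2)}{1 + \exp(-T/\tau_2)},
\]
and taking logarithms and rearranging gives exactly the bound \eqref{eq:minkowski_pos}. Every step is an equivalence, so the ``iff'' follows directly.

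There is no real obstacle; the only subtle point to flag is the regime where the logarithm's argument is non-positive, i.e. when $\exp(-T/\tau_2) \geq 3$, equivalently $T \leq -\tau_2 \log 3$. In that regime $F_2 \leq 1/4$ while $F_1 \leq 1$, so $F_1 F_2 < 1/4$ strictly and the condition $\mathcal{F}(p,q) \geq 1/2$ fails; correspondingly the right-hand side of \eqref{eq:minkowski_pos} is undefined (or, interpreted as a squared quantity, negative), so the inequality on $D$ also fails vacuously. Both sides of the equivalence thus agree in this edge case, and the proposition holds as stated.
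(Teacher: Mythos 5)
Your proof is correct and follows essentially the same route as the paper's: substitute the parameters, reduce $\mathcal{F}(p,q)\geq \tfrac12$ to $F_1F_2\geq\tfrac14$, isolate the exponential, and take logarithms, with every step an equivalence. Your additional remark on the regime where the logarithm's argument is non-positive is a welcome clarification that the paper's proof leaves implicit, but it does not change the argument.
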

\begin{proof}
From the definition of the Triple Fermi-Dirac probability on Minkowski space-time we have 
\begin{equation*}
    \mathcal{F}(p,q) = \left(\frac{1}{e^{(D^2-T^2)/\tau_1} + 1}\frac{1}{e^{-T/\tau_2}+1}\frac{1}{2}\right)^{1/3}
\end{equation*}
Then,
\begin{align*}
    \mathcal{F}(p,q) \geq \frac{1}{2} 
    &\Longleftrightarrow \frac{1}{e^{(D^2-T^2)/\tau_1} + 1}\frac{1}{e^{-T/\tau_2}+1} \geq \frac{1}{4} \\ 
    &\Longleftrightarrow e^{(D^2 - T^2)/\tau_1} \leq \frac{3 - e^{-T/\tau_2}}{1 + e^{-T / \tau_2}} \\
    &\Longleftrightarrow D \leq \left(\tau_1\log\frac{3 - e^{-T / \tau_2}}{1 + e^{-T / \tau_2}} + T^2\right)^{1/2} \qedhere
\end{align*}
\end{proof}

The set of points $p$ and $q$ from $\mathbb{R}^n \times \mathbb{R}$ that satisfies the condition for inclusion of Euclidean Disks, which determines the embeddings that are connected by directed edges, is given by $D \leq T$. For $T \geq 0$, every pair $p$ and $q$ which satisfy $D \leq T$ also satisfy (\ref{eq:minkowski_pos}); hence the set of points $p,q \in \mathbb{R}^n \times \mathbb{R}$ which correspond to a directed edge in the Euclidean Disk Embeddings model is strictly contained in the set of pairs of points which have $\mathcal{F}(p,q) \geq \frac{1}{2}$ in the Triple Fermi-Dirac probability function on (flat) Minkowski space-time. Moreover, the difference between these two sets is a small neighbourhood of the Minkowski light cone, with the size of this set dependent on the parameters $\tau_1$ and $\tau_2$. Figure~\ref{fig:de_cone_comparison} illustrates this in the $\mathbb{R} \times \mathbb{R}$ case for $\tau_1 = \tau_2 = 0.05$ (the parameters used by the model in the experiments on the WordNet dataset.)

\begin{figure}[ht!] 
\label{fig:de_cone_comparison}
  \centering
  \caption{Boundaries of the regions containing points $q \in \mathbb{R} \times \mathbb{R}$ with $\mathcal{F}(p,q) \geq \frac{1}{2}$ (red) and with $D \leq T$ (blue) for $p = (0,0)$}
  \includegraphics[width=\columnwidth]{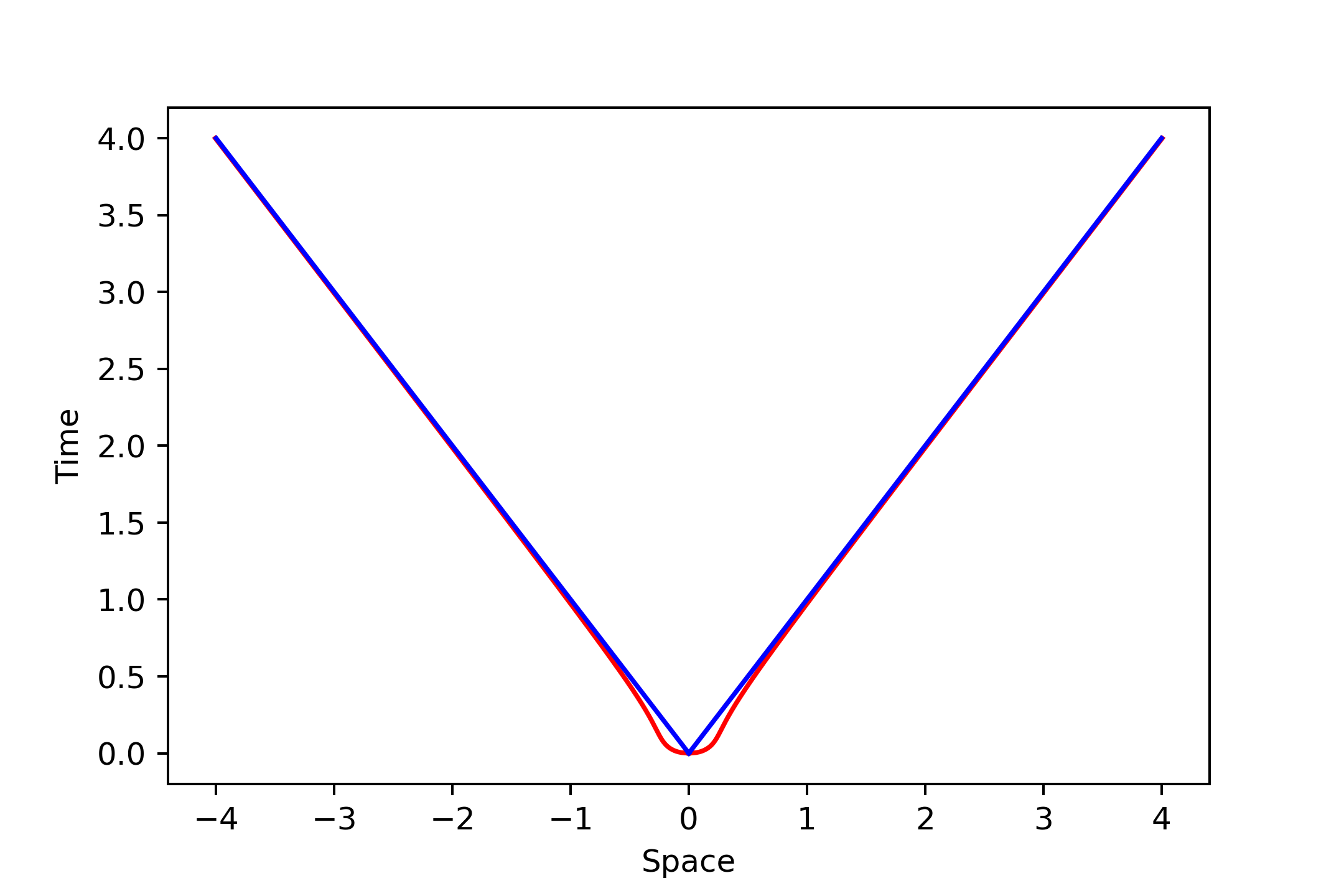}
\end{figure}

\section{Experiment details}\label{exptdetails}
\subsection{Graph Datasets}
\begin{table}[]
\label{dataset_statistics}
\caption{Statistics of datasets used in the experiments.}
\resizebox{\columnwidth}{!}{%
\begin{tabular}{lllc}
\hline
Dataset                & Nodes  & Edges   & Cyclic \\ \hline
Duplication Divergence & 100      & 1026       & True   \\
DREAM5: \textit{In silico}      & 1,565  & 4,012   & True   \\
DREAM5: \textit{E. coli}        & 1,081  & 2,066   & True   \\
DREAM5: \textit{S. cerevisiae}  & 1,994  & 3,940   & True   \\
WordNet                & 82,115 & 743,086 & False  \\ \hline
\end{tabular}%
}
\end{table}

This section describes in detail the experiments conducted in our work, including dataset and evaluation specifications.

\subsection{Hyperparameters and training details}
The model hyperparameters for the experiments on the Duplication Divergence Model and DREAM5 datasets are given in Table \ref{hptable}.  We performed a Cartesian product grid search and selected the optimal values based on the median average precision on the test set over three random initializations. We implemented a simple linear learning rate decay schedule to a final rate of a quarter of the initial rate $\lambda$; hence although we selected the best test set performance across all the training run epochs, the maximum number of epochs affects the performance as it determines the rate of learning-rate decay.

The hyperparameter tuning approach for the WordNet experiments was performed differently as it includes an F1 score threshold selection. This is described separately in Appendix \ref{wordnet_supp} below.

A summary of the datasets is provided in Table \ref{dataset_statistics}.

\subsubsection{Duplication Divergence Model}
The duplication divergence model is a simple two-parameter model of the evolution of protein-protein interaction networks \cite{ispolatov2005}. Starting from a small, fully-connected, directed graph of $n_\mathrm{i}$ nodes, the network is grown to size $n_\mathrm{f}$ as follows. First we duplicate a random node from the set of existing nodes. Next the duplicated node is linked to the neighbors of the original node with probability $p_1$ and to the original itself with probability $p_2$. This step is repeated $(n_\mathrm{f}-n_\mathrm{i})$ times. Because the model only includes duplicated edges with no extra random attachments, it is easily shown that if the seed network is a directed acyclic graph (DAG), the network remains a DAG throughout its evolution. We generate one cyclic graph from an initial 3-node seed graph, with parameters $(n_\mathrm{i}, n_\mathrm{f}, p_1, p_2) = (3, 100, 0.7, 0.7)$ and perform a single random 85 / 15 train-test split for all experiments.

\subsubsection{DREAM5}
Traditionally a network inference problem, the DREAM5 challenge \cite{marbach2012} is a set of tasks with corresponding datasets. The goal of the challenge and each task is to infer genome-scale transcriptional regulatory networks from gene-expression microarray datasets. We use only the gold standard edges from three DREAM5 gene regulatory networks for our use case: \textit{In silico}, \textit{Escherichia coli} and \textit{Saccharomyces cerevisiae}. The fourth \textit{Staphylococcus aureus} network from the DREAM5 challenge was not considered here due to the issues in evaluation described in \citet{marbach2012}. We extract only the positive-regulatory nodes from the remaining three networks while omitting the gene-expression data itself. We limit ourselves to positive-regulation edges as our work focuses on single edge type graphs. The extracted node pairs form a directed graph, where the edge between nodes $(i, j)$ represent a regulatory relation between two genes. Each network is then randomly split into train and test sets, following 85 / 15 split. Although the network is cyclic, the number of cycles in each of them is relatively low, which may account for the similarity in performances of the (non-cylindrical) Minkowski embeddings with embeddings on cylindrical Minkowski and AdS (e.g. the \textit{E. Coli} dataset experiments in Table \ref{dupdiv_dream5_table_results}). Similar to the Duplication Divergence model above, we evaluate the performance using the average precision on the test set.

\subsubsection{WordNet}
\label{wordnet_supp}
To further evaluate the embeddings of directed acyclic graphs (DAGs), we use the \textit{WordNet} (Miller, 1998) database of noun hierarchy. The WordNet dataset is an example of a tree-like network with low number of ancestors, and high number of descendants. To ensure fair comparison we use the same dataset and split as in \citet{suzuki2019}, proposed in \citet{ganea2018}. During training and evaluation, we use nodes $(i, j) \in \mathcal{T} $ connected by an edge such that $c_{i} \preceq c_{j}$ as positive examples. As the dataset consists only of positive examples, we randomly sampled a set of non-connected negative pairs for each positive pair. For evaluation, we use an F1 score for a binary classification to assess whether a given pair of nodes $(i, j)$ is connected by a directed edge in the graph. Similarly as in (Suzuki, 2019) we compute the results based on the percentage of transitive closure of the graph.

As WordNet is a DAG, we restrict our attention to the model based on flat Minkowski spacetime. Here, the key hyper-parameters are the learning rate and values for $\alpha$, $\tau_1$ and $\tau_2$ (we set $r = 0$ and $k=1$ in these experiments). We perform a Cartesian product grid search and select the optimal values based on the mean F1 score on the validation set over two random initializations (seeds), using $50$ epochs and batch size $50$. The threshold for the F1 score was chosen by line search and tuned.  The grid of values is as follows, with the optimal values given in \textbf{bold}: $\lambda \in \{0.01, \mathbf{0.02}, 0.1\}$, $\alpha \in \{0., 0.0075, \mathbf{0.075}\}$, $\tau_1 \in \{0.005, \mathbf{0.05}, 1.\}$ and $\tau_2 \in \{0.005, \mathbf{0.05}, 1.\}$. We then trained a model with the selected parameters for $150$ epochs using $5$ different random seeds to generate the results reported in Table~\ref{Wordnet_table}. Here, we report the test set F1 score using a threshold selected via line search on the validation set.

\begin{table*}[ht!]
\caption{F1 percentage score on the test data on WordNet. The best flat-space performance (top-half) for each dataset/embedding dimension combination has its background highlighted in gray, and the best overall is highlighted in \textbf{bold}. The benchmark methods results were taken from \cite{suzuki2019}. For the results of our method, we report the median together with standard deviation across seeds (N=5).}
\label{Wordnet_table}
\begin{adjustbox}{max width=\textwidth}
\begin{tabular}{lccccccccc}
                        & \multicolumn{4}{c}{$d=5$} &  & \multicolumn{4}{c}{$d=10$} \\ 
                        \cline{2-5}  \cline{7-10} 
\textit{Transitive Closure Percentage}  & 0\%    & 10\%     & 25\%      & 50\%  &    & 0\%    & 10\%      & 25\%       & 50\%      \\ \toprule
Minkowski + TFD (Ours)                  & 21.2 ± 0.5    & \cellcolor[HTML]{EFEFEF}77.8 ± 0.1    
                                        & \cellcolor[HTML]{EFEFEF}86.2 ± 0.7   & \cellcolor[HTML]{EFEFEF}92.1 ± 0.3  & 
                                        & 24 ± 0.5    & \cellcolor[HTML]{EFEFEF}82 ± 1.2    
                                        & \cellcolor[HTML]{EFEFEF}89.3 ± 0.7  & \cellcolor[HTML]{EFEFEF}\textbf{94.4  ± 0.1}    \\
Order Emb. \cite{vendrov2016}           & 34.4  & 70.6      & 75.9    & 82.1    &    
                                        & 43.0     & 69.7    & 79.4     & 84.1   \\
Euclidean Disk \cite{suzuki2019}        & \cellcolor[HTML]{EFEFEF}35.6  & 38.9      & 42.5    & 45.1            &  
                                        & \cellcolor[HTML]{EFEFEF}\textbf{45.6}  & 54.0   & 65.8    & 72.0  \\ \hline
Spherical Disk \cite{suzuki2019}        & \textbf{37.5}  & \textbf{84.8}  & \textbf{90.5} & \textbf{93.4} &  
                                        & 42.0           & \textbf{86.4} & \textbf{91.5} & 93.9    \\
Hyperbolic Disk \cite{suzuki2019}       & 32.9  & 69.1  & 81.3          & 83.1          &  
                                        & 36.5   & 79.7          & 90.5          & 94.2    \\
Hyperbolic EC \cite{ganea2018}          & 29.2  & 80.0  & 87.1          & 92.8          &  
                                        & 32.4   & 84.9          & 90.8          & 93.8    \\
Poincare Emb. \cite{nickel2014}         & 28.1  & 69.4  & 78.3          & 83.9          &  
                                        & 29.0   & 71.5          & 82.1          & 85.4    \\ 
\end{tabular}
\end{adjustbox}
\end{table*}

\begin{table*}[ht!]
\label{dupdiv_dream5_table_results}
\caption{Link prediction for directed cyclic graphs with embedding dimension $d$. Reported above are the median average precision (AP) percentages with standard deviation across seeds (N=20), calculated on a held-out test set for varying embedding dimension. Annotated in \textbf{bold} is the top-performing model for the given dimension. For reference, the random baseline AP is $20\%$.} 
\resizebox{\textwidth}{!}{%
\begin{tabular}{llccccc}
 &                             & \multicolumn{5}{c}{Embedding dimension}                        \\ \cline{3-7} 
 &                             & 3          & 5          & 10         & 50         & 100        \\ \toprule
\multirow{5}{*}{Duplication Divergence} & Euclidean + FD & $37.8 \pm 2.8$    & $39.4 \pm 2.4$    & $39.0 \pm 1.9$    & $38.9 \pm 1.9$    & $38.9 \pm 1.9$    \\ 
 & Hyperboloid + FD            & $36.3 \pm 2.2$    & $37.5 \pm 2.4$    & $38.2 \pm 2.3$    & $38.2 \pm 2.4$    & $38.1 \pm 2.3$    \\ \cmidrule{2-7} 
 & Minkowski + TFD             & $43.7 \pm 2.2$    & $47.5 \pm 2.5$    & $48.5 \pm 3.7$    & $48.5 \pm 3.7$    & $48.5 \pm 3.7$    \\ 
 & Anti de-Sitter + TFD & $50.1 \pm 3.2$    & $52.4 \pm 3.3$    & $56.2 \pm 3.2$    & $56.3 \pm 3.1$    & $56.8 \pm 3.0$    \\ 
 & Cylindrical Minkowski + TFD & $\mathbf{55.8 \pm 3.6}$    & $\mathbf{61.6 \pm 4.8}$    & $\mathbf{65.3 \pm 4.1}$    & $\mathbf{65.7 \pm 3.1}$    & $\mathbf{65.6 \pm 3.2}$    \\ \midrule
\multirow{5}{*}{DREAM5: \textit{in silico}}      & Euclidean + FD & $29.4 \pm 2.1$ & $32.9 \pm 2.5$ & $39.7\pm 1.8$  & $39.8 \pm 1.6$ & $34.8\pm 1.1$  \\
 & Hyperboloid + FD            & $28.8 \pm 5.5$ & $46.8 \pm 4.6$ & $50.8 \pm 7.4$ & $50.9 \pm 1.5$ & $52.5 \pm 1.5$ \\ \cmidrule{2-7} 
 & Minkowski + TFD             & $36.3 \pm 2.3$ & $43.1 \pm 3.1$ & $51.2 \pm 3.0$ & $57.7 \pm 2.8$ & $58.0 \pm 2.7$ \\ 
 & Anti de-Sitter + TFD        & $38.1 \pm 4.8$    & $45.2 \pm 2.3 $   & $51.9 \pm 5.2$    & $55.6 \pm 4.2$    & $56.0 \pm 3.4$    \\ 
 & Cylindrical Minkowski + TFD & $\mathbf{41.0 \pm 3.6}$ & $\mathbf{48.4 \pm 7.3}$ & $\mathbf{56.3 \pm 8.4}$ & $\mathbf{58.9 \pm 2.9}$ & $\mathbf{61.0 \pm 1.9}$ \\  \midrule

\multirow{5}{*}{DREAM5: \textit{E. Coli}}        & Euclidean + FD & $33.0 \pm 3.9$ & $34.2\pm 3.4$  & $40.2 \pm 4.3$ & $44.5 \pm 2.6$ & $49.0 \pm 3.2$ \\
 & Hyperboloid + FD            & $43.4 \pm 4.1$ & $47.2 \pm 3.3$ & $52.7 \pm 1.9$ & $53.6 \pm 1.4$ & $50.6 \pm 0.7$ \\ \cmidrule{2-7} 
 & Minkowski + TFD             & $\mathbf{51.0 \pm 4.0}$ & $\mathbf{58.4 \pm 2.3}$ & $\mathbf{63.4 \pm 3.6}$ & $\mathbf{67.7 \pm 2.7}$ & $\mathbf{68.2 \pm 2.4}$ \\ 
 & Anti de-Sitter + TFD        & $42.7 \pm 3.7$    & $56.5 \pm 2.6$    & $61.8 \pm 6.8$    & $63.3 \pm 4.8$    & $63.0 \pm 7.5$    \\  
 & Cylindrical Minkowski + TFD & $50.3 \pm 3.3$ & $56.8 \pm 3.4$ & $62.3 \pm 3.3$ & $65.8 \pm 3.4$ & $63.2 \pm 2.4$ \\ \midrule
\multirow{5}{*}{DREAM5: \textit{S. Cerevisiae}}  & Euclidean + FD & $33.0 \pm 2.7$ & $34.2 \pm 2.8$ & $40.2 \pm 3.3$ & $44.5 \pm 3.5$ & $49.0 \pm 2.0$ \\
 & Hyperboloid + FD            & $29.2 \pm 2.5$ & $37.9 \pm 1.3$ & $46.5 \pm 1.6$ & $48.8 \pm 1.4$ & $47.9 \pm 1.2$ \\ \cmidrule{2-7} 
 & Minkowski + TFD             & $34.7 \pm 2.2$ & $38.6 \pm 1.9$ & $46.4 \pm 3.1$ & $52.7 \pm 3.0$ & $54.0 \pm 2.5$ \\ 
 & Anti de-Sitter + TFD        & $37.2 \pm 3.2$    & $41.3 \pm 1.5$    & $44.9. \pm 2.5 $   & $47.5 \pm 3.1$    & $49.4 \pm 3.3$    \\ 
 & Cylindrical Minkowski + TFD & $\mathbf{37.4 \pm 3.2}$ & $\mathbf{42.7 \pm 2.3}$ & $\mathbf{46.8 \pm 3.5}$ & $\mathbf{53.4 \pm 2.2}$ & $\mathbf{54.6 \pm 2.1}$ \\
\end{tabular}%
}
\end{table*}

\begin{table*}[ht!]
\caption{Cyclic graph inference hyperparameters. Optimal values* are in \textbf{bold}. * In a situation where the optimal value was dependent on the embedding dimension, we highlight all the values that recorded best performance on at least one embedding dimension.  }
\begin{adjustbox}{max width=1.3\textwidth, angle=90}
\begin{tabular}{l|ccccccccc}
& circumference & $\tau_1$ & $\tau_2$ & $r$ & $\lambda$ & batch size & $\alpha$ & Max epochs \\
\toprule
DD Model (Euclidean) & - &$(0.04, 0.075, 0.15, \mathbf{0.4}, 1.0, 2.0)$ & - & $(\mathbf{0.0}, 0.1, 1.0)$ & $(0.08, \mathbf{0.02}, 0.001)$ & $\mathbf{4}$ & - & $\mathbf{300}$ \\
DD Model (Hyperboloid) & - &$(0.04, \mathbf{0.075}, 0.15, 0.4, 1.0, 2.0)$ & - & $(\mathbf{0.0}, 0.1, 1.0)$ & $(0.08, 0.02, \mathbf{0.001})$ &  $\mathbf{4}$ & - & $\mathbf{300}$ \\
DD Model (Minkowski) & - &$(\mathbf{0.075}, 0.15, 0.4)$ & $(\mathbf{0.03}, 0.05, 0.07)$ & $\mathbf{0.0}$ & $(0.08, \mathbf{0.02})$ & $\mathbf{2}$ & $(0.03, 0.045, \mathbf{0.06}, 0.075, 0.09)$ & $\mathbf{200}$ \\
DD Model (Cylindrical Minkowski) & $(6, 8, \mathbf{10}, 12, 14)$ & $(0.075, 0.15, \mathbf{0.4})$ & $(0.03, 0.05, \mathbf{0.07})$ & $\mathbf{0.0}$ & $(0.08, \mathbf{0.02})$ & $\mathbf{2}$ & $(0.03, 0.045, 0.06, 0.075, \mathbf{0.09})$ & $\mathbf{200}$ \\
DD Model (AdS) & - &$\mathbf{0.4}$ & $\mathbf{0.15}$ & $(-0.25, -0.2, -0.15, \mathbf{-0.1}, -0.05, -0.025, 0.0)$ & $(0.02, \mathbf{0.016}, 0.012, 0.008)$ & $\mathbf{2}$ & $\mathbf{0.15}$ & $\mathbf{150}$ \\
\midrule
DREAM5: In Silico (Euclidean) & - &$(\mathbf{0.04}, 0.075, 0.15, 0.4, 1.0, 2.0)$ & - & $(\mathbf{0.0}, 0.1, 0.5)$ & $(0.2, 0.08, \mathbf{0.02}, \mathbf{0.002})$ & $\mathbf{2}$ & - & $\mathbf{60}$ \\
DREAM5: In Silico (Hyperboloid) & - &$(0.04, 0.075, \mathbf{0.15}, 0.4, 1.0, 2.0)$ & - & $(\mathbf{0.0}, 0.1, \mathbf{0.5})$ & $(0.2, \mathbf{0.08}, 0.02, \mathbf{0.002})$ &  $\mathbf{2}$ & - & $\mathbf{60}$ \\
DREAM5: In Silico (Minkowski) & - &$(0.075, \mathbf{0.15}, 0.4)$ & $(0.03, \mathbf{0.05}, 0.07)$ & $\mathbf{0.0}$ & $(\mathbf{0.2}, 0.08, \mathbf{0.02})$ & $\mathbf{2}$ & $(0.03, 0.045, \mathbf{0.06}, 0.075, 0.09)$ & $\mathbf{60}$ \\
DREAM5: In Silico (Cylindrical Minkowski) & $(6, \mathbf{8}, 10, 12, 14)$ & $(0.075, \mathbf{0.15}, 0.4)$ & $(\mathbf{0.03}, 0.05, 0.07)$ & $\mathbf{0.0}$ & $(\mathbf{0.2}, 0.08, \mathbf{0.02})$ & $\mathbf{2}$ & $(0.03, 0.045, 0.06, \mathbf{0.075}, 0.09)$ & $\mathbf{60}$ \\
DREAM5: In Silico (AdS) & - &$0.075, \mathbf{0.15}$ & $(0.05, \mathbf{ 0.07})$ & $\mathbf{-0.1}$ & $(\mathbf{0.01}, \mathbf{0.02})$ & $\mathbf{2}$ & $(0.03, 0.04, 0.05, \mathbf{0.06}, 0.07)$ & $\mathbf{60}$ \\
\midrule
DREAM5: E. Coli (Euclidean) & - &$(0.04, 0.075, 0.15, 0.4, \mathbf{1.0}, 2.0)$ & - & $(\mathbf{0.0}, 0.1, 0.5)$ & $(\mathbf{0.08}, \mathbf{0.02}, 0.002)$ & $\mathbf{2}$ & - & $\mathbf{60}$ \\
DREAM5: E. Coli (Hyperboloid) & - &$(0.04, 0.075, \mathbf{0.15}, 0.4, 1.0, 2.0)$ & - & $(\mathbf{0.0}, \mathbf{0.1}, 0.5)$ & $(0.08, 0.02, \mathbf{0.002})$ &  $\mathbf{2}$ & - & $\mathbf{60}$ \\
DREAM5: E. Coli (Minkowski) & - &$(0.075, \mathbf{0.15}, 0.4)$ & $(0.03, 0.05, \mathbf{0.07})$ & $\mathbf{0.0}$ & $(\mathbf{0.2}, 0.08, \mathbf{0.02})$ & $\mathbf{2}$ & $(0.03, 0.045, \mathbf{0.06}, 0.075, 0.09)$ & $\mathbf{60}$ \\
DREAM5: E. Coli (Cylindrical Minkowski) & $(6, \mathbf{8}, 10, \mathbf{12}, 14)$ & $(\mathbf{0.075}, 0.15, 0.4)$ & $(0.03, \mathbf{0.05}, 0.07)$ & $\mathbf{0.0}$ & $(0.2, 0.08, \mathbf{0.02})$ & $\mathbf{2}$ & $(0.06, \mathbf{0.075}, 0.09)$ & $\mathbf{60}$ \\
DREAM5: E. Coli (AdS) & - &$\mathbf{0.15}$ & $\mathbf{0.07}$ & $\mathbf{-0.1}$ & $(\mathbf{0.01}, \mathbf{0.02})$ & $\mathbf{2}$ & $\mathbf{0.06}$ & $\mathbf{60}$ \\
\midrule
DREAM5: S. Cerevisiae (Euclidean) & - &$(0.04, 0.075, 0.15, 0.4, 1.0, \mathbf{2.0})$ & - & $(\mathbf{0.0}, \mathbf{0.1}, 0.5)$ & $(\mathbf{0.2}, , \mathbf{0.08}, 0.02, 0.002)$ & $\mathbf{2}$ & - & $\mathbf{60}$ \\
DREAM5: S. Cerevisiae (Hyperboloid) & - &$(0.04, 0.075, 0.15, 0.4, 1.0, \mathbf{2.0})$ & - & $(\mathbf{0.0}, 0.1, 0.5)$ & $(\mathbf{0.2}, \mathbf{0.08}, 0.02, 0.002)$ &  $\mathbf{2}$ & - & $\mathbf{60}$ \\
DREAM5: S. Cerevisiae (Minkowski) & - &$(0.075, \mathbf{0.15}, 0.4)$ & $(\mathbf{0.03}, 0.05, 0.07)$ & $\mathbf{0.0}$ & $(\mathbf{0.2}, 0.08, \mathbf{0.02})$ & $\mathbf{2}$ & $(0.03, \mathbf{0.045}, 0.06, 0.075, 0.09)$ & $\mathbf{60}$ \\
DREAM5: S. Cerevisiae (Cylindrical Minkowski) & $(6, \mathbf{8}, 10, \mathbf{12}, 14)$ & $(\mathbf{0.075}, 0.15, 0.4)$ & $(0.03, \mathbf{0.05}, 0.07)$ & $\mathbf{0.0}$ & $(0.2, 0.08, \mathbf{0.02})$ & $\mathbf{2}$ & $(0.03, 0.045, 0.06, \mathbf{0.075}, 0.09)$ & $\mathbf{60}$ \\
DREAM5: S. Cerevisiae (AdS) & - &$\mathbf{0.15}$ & $\mathbf{0.07}$ & $\mathbf{-0.1}$ & $(\mathbf{0.01}, \mathbf{0.02})$ & $\mathbf{2}$ & $\mathbf{0.06}$ & $\mathbf{60}$ \\
\bottomrule
\end{tabular}
\end{adjustbox}
\label{hptable}
\end{table*}

\subsection{Model complexity and runtimes}
The model complexity is similar to standard (Euclidean) embedding models. For the pseudo-Riemannian manifold examples in this paper, the computation of the pseudo-Riemannian SGD vector from the JAX autodiff-computed value for the differential $df$ scales linearly with embedding dimension. The approximation to infinite sum in the wrapped TFD function introduces a $\mathcal{O}(m)$ complexity factor, where $m$ is the number of cycles one uses in the approximation (see eq. (21) in the main text).

The cyclic graph link prediction experiments were performed as CPU-only single process runs. The longest runtime for the 100-dimensional AdS manifold embedding model was $\sim 19$ $\text{sec} / \text{epoch}$ for the Duplication Divergence model graph and $\sim 112$ $\text{sec} / \text{epoch}$ for the \textit{in silico} DREAM5 dataset. The equivalent runtimes for the other manifolds are approximately an order of magnitude shorter.

The WordNet experiments were performed on NVIDIA V100 GPUs. The run-times were largely similar for the cases with embedding dimension $5$ or $10$, with the proportion of the transitive closure included in the training data (i.e. the training data size) being the main factor which determined run-time in our experiments. These varied from $\sim 60$ $\text{sec} / \text{epoch}$ when $0\%$ was included to $\sim 280$  $\text{sec} / \text{epoch}$ when $50\%$ was included. 

\end{document}